\newtheorem{theorem}{Theorem}
\newtheorem{definition}{Definition}
\newtheorem{lemma}{Lemma}
\title{On Minimal Accuracy Algorithm Selection in Computer Vision and Intelligent Systems}
\author{
  Martin~Lukac\\
  Department of Computer Science\\
  Nazarbayev University\\
  Astana, 010000, Kazakhstan \\
  \texttt{martin.lukac@nu.edu.kz} \\
  \\
  Kamila~Abdiyeva \\
  Nazarbayev University \\
  Astana, 010000, Kazakkhstan\\
  \texttt{kabdiyeva@nu.edu.kz}\\
  \\
  Michitaka~Kameyama\\
  Ishinomaki Senshu University\\
  Ishinomaki, Japan\\
  \texttt{michikameyama@isenshu-u.ac.jp}\\
  %% Address \\
  %% \texttt{email} \\
  %% \AND
  %% Coauthor \\
  %% Affiliation \\
  %% Address \\
  %% \texttt{email} \\
  %% \And
  %% Coauthor \\
  %% Affiliation \\
  %% Address \\
  %% \texttt{email} \\
  %% \And
  %% Coauthor \\
  %% Affiliation \\
  %% Address \\
  %% \texttt{email} \\
}
\begin{document}
% \nipsfinalcopy is no longer used

\maketitle

\begin{abstract}
In this paper we discuss certain theoretical properties of algorithm selection approach to image processing and to intelligent system in general. We analyze the theoretical limits of algorithm selection with respect to the algorithm selection accuracy. We show the theoretical formulation of a crisp bound on the algorithm selector precision guaranteeing to always obtain better than the best available algorithm result. 
\end{abstract}

\section{Introduction}
Algorithm Selection is a meta-approach that can be seen as an alternative to building more and more complex and general algorithms. Initially introduced by Rice~\cite{rice:76} in the context of task scheduling the algorithm selection has been applied to variety of problems but has never became a mainstream. The main reason is probably the fact that algorithm selection is a meta-approach to problem solving and thus require a relatively large prior knowledge about the problem. However for an efficient algorithm selection, features, attributes, and other types of partial information must be extracted from the input data. The process of obtaining distinctive partial information thus leads to an inevitable computational overhead. Consequently in order to use algorithm selection, the problem must be computationally demanding and must be defined on a large feature space. Such problem space cannot then be efficiently searched with a single algorithm but rather an adaptive selection will provide the correct set of tools to efficiently solve the problem.

%Recently algorithm selection has been used on several hard problems in machine vision because some of the problems in this area are very difficult~\cite{lukac:15,kim:16}. 
Computer vision deals with real world input information: the number of combinations of input features and of environmental conditions is too large for a single algorithm to handle efficiently. Also, computer vision is a very active area and thus a very large number of algorithms already exists and is constantly being developed. Thus applying algorithm selection to computer vision problems is a promising application area.

This paper studies one particular theoretical problem of algorithm selection. We show both theoretically and experimentally that the problem of algorithm selection accuracy is different from the classical stochastic algorithm selection such as the one-armed bandit scheduling~\cite{gittins:79}.  In particular we show that in algorithms where the input information determines the output the algorithm selection accuracy must be at least as accurate as the best algorithm reduced by the variance of the average algorithm output accuracy. 

As a practical verification we apply our results on the semantic segmentation problem. The reason for this choice is that semantic segmentation is a very hard problem and a large amount of algorithms have been and are currently developed. Moreover there is not a single algorithm that outperforms any other one on a case by case basis. This can be observed for instance on the results reported by the evaluation of the VOC2012 data set~\cite{everingham:10}.
%Second the problem of the high level content description is discussed and several issues pertaining to iterative high level semantic description of the problem are studied.

This paper is structured as follows. Section~\ref{sec:2} discusses the platform used as a basis for the semantic segmentation algorithm selection problem. Section~\ref{sec:3} describes the problem and the impact of the algorithm selection accuracy. Finally Section~\ref{sec:5} concludes the paper.

\section{Previous Work}
\label{sec:2}

The algorithm selection paradigm was originally introduced by~\cite{rice:76} and since various but only a relatively small amount of applications and studies have been made. Several works proposed general guidelines and studies such as~\cite{tyrell:93,leyton:03,armstrong:06,ali:06,smith:08}, several works considered algorithm selection to the more traditional view on behavior selection in robots~\cite{wang:07,floreano:10} while others considered a more fine grained selection to obtain optimal parameters or best features from a problem space~\cite{yang:97,peng:05,peng:08}. Previous works related to computer vision and image processing includes mainly the work of Yong~\cite{yong:03} that used algorithm selection for segmentation in noisy artificial images and by~\cite{takemoto:09} that used algorithm selection to determine best algorithm for edge detection in biological images. With respect to general robotic processing~\cite{lukac:12b,lukac:13} introduces the concept of algorithm selection into middle and high level processing of natural image segmentation and understanding. More recently~\cite{kim:16} experimented using Bayesian inference for Deep architecture selection in the problem of object recognition with very low improvements. However none of these researches provided a measure of accuracy for the algorithm selection and thus it is impossible to determine how effective the algorithm selection could be.

While in simpler tasks where the source of complexity is well known (artificial noise, contrast) and the input images are limited to a particular category (artificial images, biological cell images) the algorithm selection was successful and the algorithm selection accuracy obtained was very high (95\%)~\cite{yong:03,takemoto:09}. In segmentation of natural images~\cite{lukac:12b} the average algorithm selection accuracy remained under 70\%. In all above described approaches the algorithm selection used for input only local features such as level of noise, color intensity, edges, HOG, wavelets and so on. Moreover none of the algorithms used in the previous works contained any reasoning or manipulation of higher level information related to image content or scene description.

In this work we are focusing on the task of Semantic Segmentation. Semantic segmentation is a task in computer vision that includes the segmentation of an image I into a set of regions $\mathcal{R}_i =\{\mathfrak{r}_{i,0},\ldots \mathfrak{r}_{i,l}\}$ and the labeling of each of the regions with a set of labels $\mathcal{L} = \{\mathfrak{l}_0,\ldots,\mathfrak{l}_k\}$ defined thus by a mapping $S:I\rightarrow \mathcal{R}\times \mathcal{L}$. The process can be described on the pixel level by letting $\mathcal{P}_i=\{\mathfrak{p}_{i,0}\ldots\mathfrak{p}_{i,n}\}$ be the set of pixels constituting the image $I$ and $S:I\rightarrow=\mathcal{P}\times\mathcal{L}$. %our focus is mainly on the diversity of algorithms in the first three level of processing~\cite{lukac:13}.

The accuracy of an algorithm performing the semantic segmentation is evaluated by a pixel-wise comparison of a desired ground truth $\mathcal{G}_i=\{\mathfrak{l}_{i,0}\ldots\mathfrak{l}_{i,n}\}$ with the actual output of an algorithm $\mathcal{L}_{\mathfrak{a}}=\{\mathfrak{l}_{0,0}\ldots\mathfrak{l}_{n,j} \}$ using the f-measure~\cite{martin:01}: 
\begin{equation}
	f=\frac{1}{1+\text{result }\text{pixels} - \text{matched} \text{ pixels}}+\frac{\text{ matched }\text{ pixels }}{\text{ground} \text{ truth }\text{ pixels}}
	\label{eq:f}
\end{equation}
where the first term on the right hand side of eq.~\ref{eq:f} represents the precision while the second term represents the reconstruction. 

%In general each algorithm's score represents the average f value over a set of data samples.

For our experimental purposes we decided to use a simplified version of the f measure: the reconstruction part of the f measure form eq.~\ref{eq:f}.  The reason for this simplified version of f measure is practical. By using only the precision component of the f measure the average accuracy of semantic segmentation algorithms will be higher which will allows us to determine requirements for efficient algorithm selection with high quality algorithms. 

%The most important fact is that the various algorithms in the proposed processing levels are highly heterogeneous and even within a same level the variety is relatively high. Moreover because the processing is adaptively changed not only for every image but also for various regions in the image, a real-time platform requires a high-speed platform permitting both the high speed of processing and of reconfiguration.

\subsection{Algorithm Selection Platform}

In this paper we follow the previously introduced framework for high-level information understanding using algorithm selection~\cite{lukac:15}. The algorithm-selection framework is described by the pseudo code~\ref{algo:1}. The platform was introduced with the intent of adding higher level information to the available computer vision algorithms. The goal of the combination improving accuracy selection as well as final performance in algorithms dealing with semantic and symbolic content of the input images. 
%\begin{figure}[hbt]
%	\centering
%	\includegraphics[width=0.7\linewidth]{./pics/Picture3a.eps}
	%\includegraphics[bb = 0 0 770 500,clip,width=0.5\textwidth]{Picture3a}
%	\caption{\label{fig:system}Algorithm Selection Based Platform for Image Understanding}
%\end{figure}

%The system operation can be concisely described in the pseudo-code~\ref{algo:1}.

\begin{algorithm}[bht]
	\caption{\label{algo:1} Pseudo-code showing the operation of he Autonomous Selection Method (ASM)}
	\begin{algorithmic}[1]
	\State $F_i\gets Features(I_i)$\Comment{Extract Features from Image }$I_i$
	\State $A^0\gets Select(F_i,A)$\Comment{Select the most appropriate algorithm $A_k$ using features $F_i$}
	\State $S_{i}^0\gets A^0(I_i)$\Comment{Process image $I_i$ using algorithm $A_k$}
	\State $G_{i}^0\gets RGraph(S_{ik})$\Comment{Construct multi-relational graph from $S_{ik}$}
	\State $t\gets 0$
	\While{$True$}
		\State $C_{i}^t = Verify(G_i^t)$\Comment{Check $G_i^t$ for semantic contradiction}
		\If{$C_{i}^t == \emptyset$}  break\Comment{If contradiction does not exists}
		\EndIf
		\State $H_i^t = Hypothesis(C_i^t,G_i^t,M)$ \Comment{Generate hypothesis to resolve the contradiction}
		\If{$H_i^t == H_i^{t-1}$} break \Comment{If no new hypothesis exists}
		\EndIf
		\State $F_{C_{i}^t} = Features(I_i,C_{i})$ \Comment{Extract features from the region of contradiction}
		\State $A^t\gets Select(F_{C_{i}^t},H_i^t,A)$\Comment{Select new algorithm using hypothesis and features}
		\If{$A^t == A^{t-1}$} break \Comment{If no new algorithms can be selected}
		\EndIf
		\State $S_{i}^t\gets A^t(I_i)$\Comment{Process the image by the selected algorithm}
		\State $G_{i}^{t+1}\gets Merge(RGraph(S_{i}^t), G_{i}^t)$\Comment{Merge $G_i^t$ graph with the new graph $RGraph(S_i^t)$}
		\State $t\gets t+1$
	\EndWhile
\end{algorithmic}
\end{algorithm}
The ASM starts by extracting features from the whole image (line 1), selects an algorithm (line 2) and processes the input image $I_i$ with the algorithm to obtain a semantic segmentation $S_i^0$ (line 3). A multi-relational graph $G_i^0$ is constructed (line 4) representing inter-object relations. This graph is verified for semantic contradictions (line 7). A contradiction is in this model an relation of size, shape, proximity or occurrence that violates model built from data. 
If contradiction is found a hypothesis that resolves the contradiction (line 10) is proposed. Hypothesis is simply a new object that satisfies more the relational graph. The hypothesis is transformed into a set of attributes, features are extracted from the region that caused the contradiction and both are used to select a new algorithm (line 14). The new algorithm is used to process the image (line 17) and the new graph obtained from resulting semantic segmentation $RGraph(S_i^t)$ is merged with the previous one $G_i^t$. This loop iterates until either no more contradiction exists or no new hypothesis or new algorithm can be selected.

\section{Selection Precision}
\label{sec:3}
One of the main problems of the algorithm selection is the balance of overhead computation and performance achievement. First in order for the algorithm selection to be efficient and effective let's define a cost of computation of resources required for selection. 
\begin{definition}[Computation Cost I]
	is the amount of computation $E(a_j(I_i))$ required to obtain result of processing $a_j(I_i)$ from the initial input $I_j$ using algorithm $a_j\in A$ with $A=\{a_o,\ldots,a_n\}$ being the set of available algorithms.
\end{definition}
\begin{definition}[Algorithm Score]
	is the value obtained by evaluating algorithm's result $\sigma_{ij}$ representing the f value obtained by evaluating the result of $a_j(I_i)$. We will denote the average score of algorithm $a_j$ by $\sigma_j$.
\end{definition}

When discussing the average cost of computation the amount of computation that an algorithm $a_j$ requires to process a data set will be referred to $E(j) = \frac{1}{\vert A\vert}\sum_{i=0}^{\vert A\vert} E(a_j(I_i))$.

Now let the computation be generalized to two subtasks: a selection of algorithm using an algorithm selector method $S$ and the selected algorithm $a_j$.
\begin{definition}[Algorithm Selection]
	is a heuristic function given by the mapping $S: I\rightarrow A$.
\end{definition}
\begin{definition}[Algorithm Selection Cost]
	is the amount of computation $E_S(S(I_i, A))$ required to obtain algorithm $a_j$ -  the best algorithm.
\end{definition}
Thus processing amount required to process $I_i$ using an algorithm selection scheme is given by $E_S+E$.

\begin{definition}[Algorithm Selection Accuracy]
The accuracy of the algorithm selection process $Acc(S)$is evaluated on sample data level. It is given as a percentage representing the amount of data samples for which the selector have chosen the best algorithm $best\_selected$ divided by the total number of data samples $N$ : $Acc(S) = \frac{best\_selected}{N}$
\end{definition}

An accuracy optimal algorithm selection $S$ will select an algorithm for processing the input subject to maximal  f value $f_{max}$: 
\begin{equation}
	S_f: I\xrightarrow{f_{max}} A
\end{equation}

%An optimal algorithm selector is a accuracy optimal algorithm selection $S$ subject to minimization of $E_S(S(I_i, A))$. %Moreover, the optimal selector given in eq.~\ref{eq:os}
%also directly implies 
%\begin{equation}
%	E_{c_{min}}\equiv f(a_j, I_i) = f_{max}
%\end{equation}

\begin{definition}[Computation Cost II]
	is the amount of computation $E_T(a_j(I_i))$ required to obtain result of processing $a_j(I_i)$ from the initial input $I_i$ using algorithm $a_j$ that was obtained by $a_j = S(I_i, A)$ with $S(\cdot)$ being a selector function minimizing some accuracy function $C$ shown in eq.~\ref{eq:cost}.
\begin{equation}
	C: A\times I\rightarrow\sigma
	\label{eq:cost}
\end{equation}
\end{definition}

%\begin{definition}[Minimal Selection Accuracy]
%is such accuracy in selecting algorithms that will alaways result

%In order to insure that the function $S(\cdot)$ provides selection algorithms minimizing $c$ must be selected accurately. 

\subsection{Binary Case}
\label{sec:bin}

Let there be four algorithms performing semantic segmentation. These algorithms are 1~\cite{ladicky:10}, 2~\cite{carreira:10}, 3~\cite{bharath:14} and 4~\cite{chen:14}. 

To start the analysis we will analyze a completely theoretical and simplified problem case that is however a good start for the algorithm selection accuracy study. In this case we assume that the semantic segmentation is a binary process. Each algorithm score $\sigma$ is either 1 or 0 depending on whether a given algorithm successfully segments an image or not. Such binary results are obtained by taking 100 images from the VOC2012 data set~\cite{everingham:10} and instead of taking the averages of f values of each algorithm the score was simply binarized; the algorithm with highest score of segmentation is given a score 1 all others are given score of 0.

%Assume that the given a data set and for each image there is always only one algorithm that obtains 1 ad all others obtain 0 as accuracy of the segmentation. Then, their representative results are shown in Table~\ref{tab:segres}. 
 The scores reported in Table~\ref{tab:segres} are obtained using binary evaluation; each algorithm is evaluated with a binary score. This means that $\sigma_{ij} = 1 \text{ iff } \forall k\neq j,\; \sigma_{ik} \leq \sigma_{ij}$, where $i$ is the index of the input image $I_i$, and $k,j$ are two different algorithms such that $a_k\in \{a_0,\ldots,a_n\}$. Consequently the score of an algorithm $a_j$ is given by $\sigma_{j} = \frac{1}{N} \sum_{i=1}^N \sigma_{ij}$ for all the $N$ images in the data set.
\begin{table}[bht]
	\centering
	\caption{\label{tab:segres} Statistical score of various semantic segmentation algorithms with binary scores}
	\begin{tabular}{|c|c|}
		\hline
		$a_j$&$\sigma_j$\\
		\hline
		1& 21\%\\
		2& 24\%\\
		3& 27\%\\
		4& 28\%\\
		\hline
	\end{tabular}
\end{table}
Notice that because each algorithm is selected only using a binary score the sum of all scores is 100\%. In this case the algorithm selection accuracy is easily approximated because algorith score is binary. The maximum result obtainable is 100\% if accuracy of selection is 100\% (eq.~\ref{eq:maxac}. 
\begin{equation}
	S_{max}(S)\vert f(S(I_i,A)(I_i)) \geq f(a_{l},I_i)
	\label{eq:maxac}
\end{equation}
for $l=0,\ldots,\vert A\vert$  and  $i=0,\ldots, N$.

To determine what is the minimal required accuracy to obtain better score than the best available algorithm observe that statistically such selector $S$ must be accurate at least as many times as the best algorithm is. 

\begin{theorem}[Minimal Accuracy in a Binary Processing Problem]

Let $\sigma_{best}$ indicate the percentage of the $a_{best}$ algorithm, then $acc_{min}(S) = \sigma_{best}$.
\end{theorem}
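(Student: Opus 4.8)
The plan is to exploit the special structure of the binary scoring scheme, under which the average score achieved by any selector collapses to exactly its selection accuracy $Acc(S)$. First I would make explicit that the binarization forces a unique algorithm to receive score $1$ on each image $I_i$ (this is the content of the ``sum of all scores is $100\%$'' remark): the per-image winner has $\sigma_{ij}=1$ and every other algorithm has $\sigma_{ik}=0$. Consequently, on image $I_i$ the score obtained by the selector $S$ is $1$ precisely when $S(I_i,A)$ coincides with that per-image winner, and $0$ otherwise.

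Second, I would sum these per-image contributions over the data set. The total score collected by the selector is $\sum_{i=1}^N \sigma_{i,S(I_i,A)} = best\_selected$, the number of images on which $S$ picked the winning algorithm. Dividing by $N$ and invoking the definition of Algorithm Selection Accuracy shows that the selector's average score equals $\frac{best\_selected}{N}=Acc(S)$. In other words, in the binary regime the achieved score and the selection accuracy are literally the same scalar.

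Third, I would compare against the single best fixed algorithm. By the Algorithm Score definition, running $a_{best}$ on every image yields average score $\sigma_{best}$. The requirement that the selector do at least as well as the best available algorithm --- the condition~(\ref{eq:maxac}) aggregated over all $N$ images --- therefore reduces to the scalar inequality $Acc(S)\geq\sigma_{best}$. The smallest accuracy meeting this bound is attained with equality, so $acc_{min}(S)=\sigma_{best}$, which is exactly the claim.

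The argument is short, and the only real care needed lies in the first step: I must state the tie-breaking convention up front, because if two algorithms were allowed to both score $1$ on the same image the per-image scores would no longer sum to $100\%$ and the identity ``average score $=Acc(S)$'' would fail. I would therefore note that this unique-winner convention is precisely what makes the binary case tractable, and that the genuinely interesting phenomenon --- where achieved score and selection accuracy diverge, forcing the accuracy threshold to depend on the \emph{variance} of the scores rather than on $\sigma_{best}$ alone --- only emerges once the binary assumption is dropped, which motivates the continuous analysis that follows.
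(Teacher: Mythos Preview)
Your proposal is correct and follows essentially the same line as the paper: both arguments hinge on the observation that in the binary regime the selector's average score collapses to its selection accuracy, so matching or exceeding $\sigma_{best}$ forces $Acc(S)\geq\sigma_{best}$ and hence $acc_{min}(S)=\sigma_{best}$. Your write-up is considerably more explicit than the paper's --- in particular your handling of the unique-winner (tie-breaking) convention and the clean identity $\frac{1}{N}\sum_i\sigma_{i,S(I_i,A)}=Acc(S)$ make the logic transparent --- but the underlying idea is identical.
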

\begin{proof}
The $a_{best}$ algorithm's $\sigma_j$ value represents how many times it is better than any other available algorithm. This is true for any other pair $a_k, \sigma_k$. Because each of the algorithm is evaluated on a binary scale, $a_j = \sum_{i=0}^{\vert I\vert} S(I_i, A) == a_j$ and $\sum_{j=0}^{\vert A\vert} a_j == a_{best}$ leads directly to $\sigma_{max} = \sigma_j = \sum_{j=0}^{\vert A\vert}\sum_{i=0}^{\vert N\vert} S(I_i, A) == a_j$
\end{proof}

If the $Acc(S) = 100\%$ then the overall score of the algorithm selector based semantic segmentation would result in 100\%  score of semantic segmentation. This is the case only because we are in the binary case where each algorithm is either 100\% correct or 100\% incorrect.

\subsection{Real-Valued Case}
\label{sec:rv}
%Note that the algorithms used are taken as example with the results of their true results accuracy can be obtained from~\cite{everingham:10}  on the VOC2012 dataset. Here the algorithms are used as a frame of reference representing different levels of accuracy in the semantic segmentation task.

When the algorithm evaluation is statistical, i.e. each algorithm has score measured by the f measure (or reduced f measure) given in eq.~\ref{eq:f}, the determination of the minimal accuracy ( necessary to always provide a better or at least a result as good as the best algorithm) requires more rigorous analysis. 

If the introduced algorithms were stochastic (random, such that input does not influences output) processes the selection could be studied using the approach used in the scheduling task problem of one or multi armed bandits~\cite{gittins:79,weber:90,gittins:11}. However, algorithms used in semantic segmentation are deterministic (the input determines in most of the cases the output) and have specific output for each input. Consequently purely statistical analysis of their performance is not sufficient and does not allow to determine the minimal required accuracy of the selection mechanism. 

For instance, let the four algorithms from Section~\ref{sec:bin} be used here as well but on a realistic case of image semantic segmentation. Their representative results (reported scores change from the original reported by authors) are shown in Table~\ref{tab:segres1}. Again the results have been obtained only as the average f value of 100 randomly chosen images. This was done in order to remain coherent with the binary case of evaluation in Section~\ref{sec:bin}.
%\begin{table}[bht]
%	\centering
%	\caption{\label{tab:segres1} Statistical score of various semantic segmentation algorithms}
%	\begin{tabular}{|c|c|}
%		\hline
%		1& 47\%\\
%		2& 48\%\\
%		3& 50\%\\
%		4& 62\%\\
%		\hline
%	\end{tabular}
%\end{table}

Assume that a set of five images that are processed by each of the available algorithms and the score $\sigma_j$ of each algorithm for each image is shown in Table~\ref{tab:segres2}. Each row shows the f value for each of the images obtained by each of the four algorithms.
\begin{table}[bht]
	\centering
	\caption{Accuracy of four used semantic segmentation algorithms}
	%\begin{tabular}{ccc}
	\subtable[][\label{tab:segres1}Statistical accuracy]{\begin{tabular}{|c|c|}
	\hline
		Algorithm&Score $\sigma_j$\\
		\hline
		1& 47\%\\
		2& 48\%\\
		3& 50\%\\
		4& 62\%\\
		\hline
	\end{tabular}}\qquad
	\subtable[][\label{tab:segres2}Exemplar accuracy]{\begin{tabular}{|c||c|c|c|c|}
		\hline
		&\multicolumn{4}{|c|}{Algorithms}\\
		Image ID&1& 2& 3&  4\\
		\hline
		1& 18\%&45\% &\textbf{78}\% &52\% \\
		2& 48\%&65\% &68\% &\textbf{78}\% \\
		3& 50\%&\textbf{70}\% &62\% &53\% \\
		4& \textbf{87}\%&28\% &54\% &44\% \\
		5& 60\%&46\% &35\% &\textbf{76}\% \\
		\hline
		$\sigma_j$&52.6\% &52.6\% &59.4\% &60.6\% \\
		\hline
	\end{tabular}}%\\
	%& &\\
	%(a)& &(c)
	%\end{tabular}
\end{table}
Let there be an algorithm selector with statistical precision 80\%. In our case it means that in average it will mismatch one algorithm out of every five choices. Because the algorithms are not stochastic a single mismatch precision is enough to seriously alter the overall result. 

For instance let the selection mechanism be using two algorithms 1 and 4 (second and fifth column in Table~\ref{tab:segres2}). The best selection for the five available images is $\{4, 4, 4, 1, 4\}$ (this is obtained as the maximum of each row between the second and fifth columns). The resulting score is $(52+78+53+87+76)/5 = 69.2\%$ when the accuracy of selection is 100\%. 

Let the $acc(S)= 80\%$ and assume that exactly one of the five algorithms has been chosen wrongly. In this setting let the worst possible score be $(52+78+53+44+76)/5 = 60.8\%$ which is barely higher than the $60.6$\% of the 4$^\text{th}$ algorithm alone. 

Note that other possible selection results will have higher average score of semantic segmentation. Thus, the minimal accuracy of $S$ is strongly depending on the individual performance of each algorithm. This can be seen on the full Table~\ref{tab:segres2}.

Analyzing closer the data from Table~\ref{tab:segres2} various worst cases (with selection accuracy of 80\% and with exactly one out of five choices being wrong) of selection for five algorithms with results introduced in Table~\ref{tab:segres3}.

\begin{table}[bht]
	\centering
	\small
	\caption{\label{tab:segres3} Worst cases of algorithm selection using results from Table~\ref{tab:segres2} with accuracy 80\%}
	\begin{tabular}{|c|c|c|c|c|c|c|}
		\hline
		Image&\multicolumn{6}{|c|}{Cases}\\
		ID&C1& C2& C3& C4& C5&Best\\
		\hline
		1& 18\%&78\% &78\% &78\% &78\%&78\% \\
		2& 78\%&48\% &78\% &78\% &78\%&78\% \\
		3& 70\%&70\% &53\% &70\% &70\%&70\% \\
		4& 87\%&87\% &87\% &28\% &87\%&87\% \\
		5& 76\%&76\% &76\% &76\% &35\%&76\% \\
		\hline
		$\sigma_j$&65.8\% &71.8\% &74.4\% &66\% &69.6\%&77.8\% \\
		\hline
	\end{tabular}
\end{table}

The worst cases presented in Table~\ref{tab:segres3} shows that with a fixed accuracy of the $S$ selector, with discrete amount of wrong selection results and without any knowledge about the relation between the sample data images, the variance of the score of the semantic segmentation can vary greatly. The variance is calculated using the formula  $var=\frac{1}{N}\sum_{i=1}^N \vert\mu-\sigma_i\vert^2 = 11.049$. Here $N=5$ and in general represents the number of statistically sampled results of average algorithm selection scores $\sigma$.

This reasoning and analysis can be expanded further. In particular let now decrease the accuracy of the selector $S$ to 60\%. In this case, let's assume that the accuracy is exact - exactly two out of the five images will be processed by wrong algorithms, the variance will rise to $var=30$.

\begin{figure}[bht]
	\centering
	\subfigure[][\label{fig:vars} Variance of the semantic segmentation score]{\includegraphics[width=0.4\linewidth]{./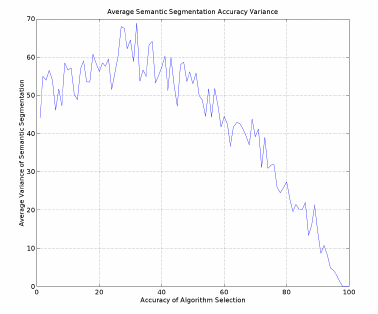}}
	\subfigure[][\label{fig:semseg} Average semantic segmentation score]{\includegraphics[width=0.4\linewidth]{./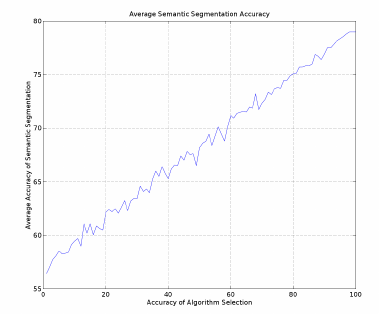}}
	\caption{ Figure showing the variance and the average score of the semantic segmentation as functions of algorithm selection accuracy}
\end{figure}

The variance of the semantic segmentation score as a function of algorithm selection accuracy is shown in Figure~\ref{fig:vars}. The x axis shows the variance of the accuracy of the algorithm selector $S$ and the y axis shows the average variance of the semantic segmentation score. The data used to generate this figure is shown in Table~\ref{tab:segres3}. Each of the point generated is averaged over 255 trials. Notice that as the accuracy of the algorithm selection decreases the variance of the semantic segmentation scores oscillates more. 

Looking closer at the Figure~\ref{fig:vars} it can be observed that in average the variance of the semantic segmentation score is increasing with decreasing algorithm selection accuracy. Observe, that however the variance is highest at around 40\% of algorithm selection accuracy and decreases on both sides. This is to be expected as the expectation is that an algorithm selection with accuracy 0\% will select the worst possible results each time and thus the variance of the average semantic segmentation score will be close to 0.

Thus for the data used here the average selection accuracy of 80\%  will result in a score of 80\% of the best possible segmentation score. Moreover the maximum variance in the semantic segmentation score is 25\% which means that using such selector with 80\% accuracy will be statistically having a lower score $\sigma_{S_{80\%}}$  worse than $\sigma_4$ shown in Table~\ref{tab:segres2}.

Complementary to the variance, the Figure~\ref{fig:semseg} shows the average score of the semantic segmentation as a function of algorithm selection accuracy. 
Notice that unlike the variance that becomes more or less constant under the accuracy of 40\% the average score linearly increases with increasing algorithm selection. Thus one can estimate the minimal accuracy of the algorithm selection by looking at the average score and variance of the semantic segmentation.

%Consequently and depending on each of the algorithm performance evaluation the algorithm selection must be designed with relatively variable accuracy taking into account the variance of the result accuracy. This means that given the accuracy and variance curve of the algorithm selector $S$ the limit of acceptability to always obtain better result than the best of the available algorithms.%  such that $acc(s)\vert avg(acc(s))-avg(var(s)) \geq $.

It can be then concluded that the accuracy of the algorithm selector can be formulated using the following lemma:
\begin{lemma}[Minimal Algorithm Selection Accuracy]
\label{lem:0}
$acc_{min}(S(I,A))$ using the set of algorithms $A\in\{a_0,\ldots,a_{n-1}\}$ and a set of input images $I=\{i_0,\ldots,i_{m-1}\}$ is the required accuracy such that for any $a_k\in A$, $k=1,\ldots,n-1$ the $\sigma_{k}\leq \sigma_{{S(I,A)}}$ with $\forall k\neq j, \sigma_{k} \geq \sigma_{j}$  and $\sigma_{k}\leq avg(acc(S(I,A)))-avg(var(\sigma_{acc(S(I,A))}))$.
\end{lemma}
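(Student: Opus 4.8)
The plan is to model the selection process as a mixture over the input set and to track how the per-image score behaves depending on whether the selector is correct on a given image. First I would fix the best algorithm $a_k$, i.e. the one whose average score $\sigma_k = \max_j \sigma_j$ is maximal (the condition $\forall k\neq j,\ \sigma_k \ge \sigma_j$), and introduce for each image $i$ the per-image best score $m_i = \max_j \sigma_{ij}$ together with the score $w_i$ obtained when a non-best algorithm is chosen. Writing $\alpha = avg(acc(S(I,A)))$ for the selection accuracy, the average score of the selector over the $m$ images decomposes as a convex combination: on an $\alpha$ fraction of images the selector attains $m_i$, and on the remaining $(1-\alpha)$ fraction it attains some $w_i$. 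This gives $\sigma_{S(I,A)} = \alpha\,\overline{m} + (1-\alpha)\,\overline{w}$, where $\overline{m}$ and $\overline{w}$ are the corresponding averages, exactly mirroring the worked computations of Table~\ref{tab:segres2} and Table~\ref{tab:segres3}.

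Next I would impose the defining requirement of the lemma, namely that the selector never does worse than the single best algorithm, $\sigma_k \le \sigma_{S(I,A)}$. Substituting the decomposition and solving for $\alpha$ yields an explicit accuracy threshold. The variance enters when bounding $\overline{w}$ in the worst case: since the framework assumes no knowledge of the correlation structure between the sampled images (as emphasised in the discussion of Table~\ref{tab:segres3}), the wrongly selected scores $w_i$ can be chosen adversarially to sit as far below the mean as the spread of the data permits. I would bound that downward deviation by the average variance $avg(var(\sigma_{acc(S(I,A))}))$ of the per-sample scores, so that the worst-case $\overline{w}$ lies a variance-width below the mean score.

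Combining the two steps, the inequality $\sigma_k \le \sigma_{S(I,A)}$ together with the worst-case bound on $\overline{w}$ produces the stated relation $\sigma_k \le avg(acc(S(I,A))) - avg(var(\sigma_{acc(S(I,A))}))$, and $acc_{min}(S(I,A))$ is then the smallest accuracy for which this holds with equality. The empirical behaviour in Figure~\ref{fig:semseg} (average score rising essentially linearly with accuracy) and in Figure~\ref{fig:vars} (variance peaking near mid-range accuracy and decaying at the extremes) would serve as a sanity check that such a threshold exists and is unique on the relevant range, since a larger spread forces a correspondingly higher accuracy to still dominate $\sigma_k$.

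I expect the main obstacle to be the rigorous justification of the variance term rather than the convex-combination bookkeeping. The statement conflates the selection accuracy, a fraction of correctly selected images, with the segmentation score $\sigma$, an f-measure average, and the two quantities do not share units; pinning down whether the correct penalty is the variance itself, its square root, or a Chebyshev-type tail bound is the delicate point. The hard part will therefore be making the adversarial model explicit, fixing the number of mis-selections at $(1-\alpha)m$ and maximising the score loss subject to the empirical variance being held fixed, which is precisely the worst-case construction tabulated in Table~\ref{tab:segres3}, and then passing from that finite construction to the general inequality without losing the intended direction of the bound.
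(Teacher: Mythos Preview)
The paper does not actually prove Lemma~\ref{lem:0}. After stating it, the authors only paraphrase it in one sentence (``the minimal accuracy \ldots must be such that even the worst case of assignment must be better than the best available algorithm score $\sigma_k$'') and then check it on two numerical examples, one from Table~\ref{tab:segres3} and one from the VOC2012 experiment. There is no derivation, no formal argument, not even a proof sketch; the lemma functions in the paper as an empirical heuristic rather than a proved statement. So your proposal is already more than the paper offers.

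On the substance of your attempt: the convex-combination decomposition $\sigma_{S(I,A)} = \alpha\,\overline{m} + (1-\alpha)\,\overline{w}$ is exactly the right bookkeeping and mirrors how the paper builds Table~\ref{tab:segres3}. Where you run into trouble is precisely where the lemma itself is ill-posed, and you diagnose this correctly in your final paragraph. The inequality $\sigma_k \le avg(acc(S(I,A))) - avg(var(\sigma_{acc(S(I,A))}))$ compares a segmentation score with a selection accuracy minus a variance; these are not commensurable quantities, and the paper's own numerical checks (e.g.\ ``$85 - 14 = 61\%$'' and later ``$93 - 0.0217 = 92.9893$'') treat accuracy percentages and score variances as if they lived on the same scale without justification. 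There is no clean way to make your worst-case bound on $\overline{w}$ produce exactly the variance term as written, because the lemma as stated is dimensionally inconsistent. Your instinct that a Chebyshev-type or standard-deviation penalty would be the honest replacement is sound, but that would yield a different inequality from the one the paper asserts. In short: your proposal is a reasonable attempt to formalise the claim, the paper provides nothing to compare it against, and the obstacle you flag is a genuine defect of the statement rather than of your argument.
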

Lemma~\ref{lem:0} states that the minimal accuracy $acc_{min(S(I,A))}$ must be such that even the worst case of assignment must be better than the best available algorithm score $\sigma_k$. Such selection mechanism will always result in better result score than any single algorithm would.

For instance, let's look at the example from Table~\ref{tab:segres2}. The maximal possible score of semantic segmentation with $acc(S) = 100\%$ is $\sigma_S= 77.8\%$ (right column in Table~\ref{tab:segres3}). The highest score of any algorithm is $\sigma_4= 60.6\%$ that is $\approx17\%$ below of the $\sigma_S$. This means that using lemma~\ref{lem:0} and Table~\ref{tab:segres3} we have  is $avg(acc(S)) - avg(var(\sigma_{acc(S)}))  \geq 60.6\%$. This can be obtained by looking closer at Figures~\ref{fig:vars} and~\ref{fig:semseg} this can be found $avg(acc{(S)}) - avg(var(\sigma_{acc(S_{85\%})})) \approx 85 - 14 = 61\%$.

%This result guarantees best result in all possible cases but might be relaxed if such guarantee is only required statistically.

%Ideally it can be shown that if the accuracy is 100\% and for algorithms evaluated algorithms the result of processing obtained by selecting algorithm will always be better. Note that in this approach to algorithm selection the selection is a discreet approach without regards to timing or duration. Rather that for each instance of input a single algorithm combination is applied. 
%

%There is one more particular problem. Assuming that the algorithm selector is obtained by machine learning  from training data, the individual algorithms should be selected in a weighted manner in order to provide the best balance between algorithm selection complexity and the obtained result quality. 

\section{Experimental Evaluation}

To evaluate the proposed hypothesis about the accuracy of algorithm selection we used the VOC2012 data set and four algorithms introduced in Section~\ref{sec:bin}. For each algorithm we determined the best objects within each image as well as each best image. %Then we verify our results in the following steps. 
%\begin{enumerate}
%\item Calculate average accuracy for each algorithm
%\item Calculate accuracy for each image and each algorithm
%\item Calculate accuracy for each object in each image for each algorithm
%\item Calculate the best possible acccuracy for image level selection
%\item Calculate the best possible accuracy for object level selection
%\item Calculate the 30\% and 60\% accuracies for objects level selection
%\item Calculate 30\% and 60\% accuracies for image level Selection
%\end{enumerate}
Then, we constructed the set of best possible images by combining best objects within every single image. The comparison of the four used algorithms scores and the selection method using 100\% selection accuracy is shown in Table~\ref{tab:results}.
\begin{table}[bht]
\centering
\caption{\label{tab:results} Comparison of semantic segmentation algorithm score with the algorithm selection assuming a 100\% selection accuracy. Algorithms semantic segmentation score is calculated using the reduced f measure.}
\begin{tabular}{|c|c|c|c|c|c|}
\hline
\multirow{2}{*}{Accuracy Type}&\multicolumn{5}{c|}{Algorithms}\\
\cline{2-6}
&1&2&3&4&Best Selection\\
\hline
Image Accuracy&76.78 &84.03 &85.53 &92.50& 94.7\\
\hline
\end{tabular}
\end{table}
Figure~\ref{fig:voc} shows the variance and accuracy of semantic segmentation using the algorithm selection approach given different levels of selection accuracy. For instance, accuracy of algorithm selection 0\% was calculated by taking the worst result for each input image. 
%\begin{table}[bht]
%\centering
%\caption{\label{tab:res2} Accuracy of semantic segmentation for various levels of algorithm selection. Algorithms semantic segmentation accuracy is calculated using the reduced f measure.}
%\begin{tabular}{|c|c|c|c|c|c|c|c|c|}
%\hline
%\multirow{2}{*}{Accuracy} &\multicolumn{8}{c|}{Selection Accuracy}\\\cline{2-9}
%&0\%&30\%&60\%&80\%&85\%&90\%&95\%&100\%\\
%\hline
%SemSeg Accuracy&69.44\%&85.51\%&87.86\%&90.52\%&94.7\%&94.7\%&94.7&94.7\%\\
%\hline
%\end{tabular}
%\end{table}
An accuracy of 30\% was obtained by selecting in 30\% of images the best algorithm while in the remaining 70\% of images select the algorithm with the worst score. %For better understanding the variance for each accuracy has been shown in Figure~\ref{fig:varlast}.
\begin{figure}[bht]
\centering
\subfigure[][\label{fig:varlast} Average variance of semantic segmentation score ]{\includegraphics[width=0.45\linewidth]{./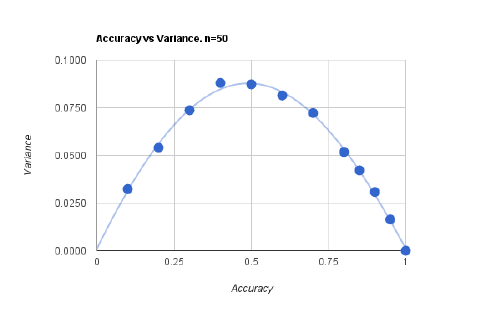}}
\subfigure[][\label{fig:acclast} Average semantic segmentation score]{\includegraphics[width=0.45\linewidth]{./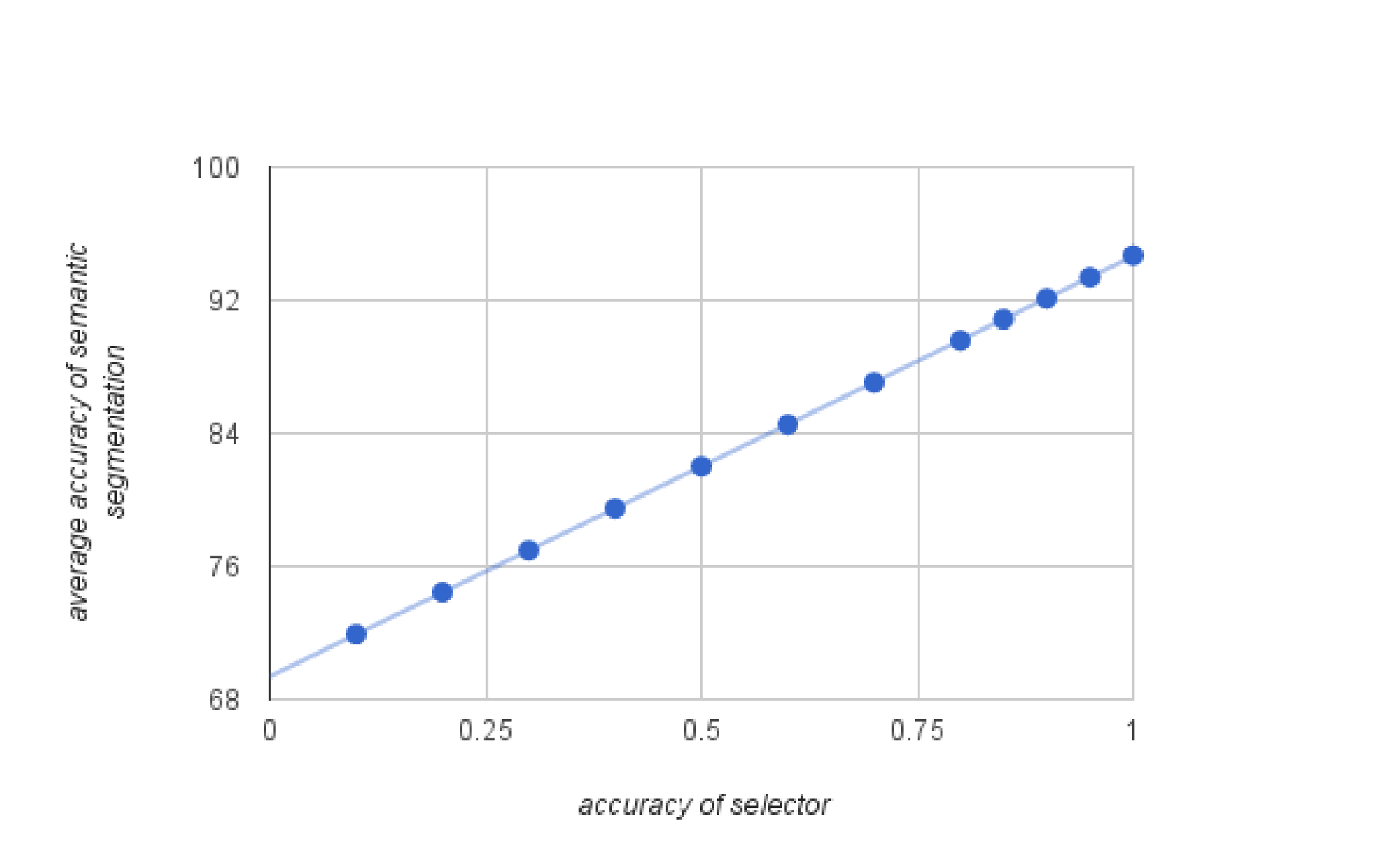}}
\caption{\label{fig:voc} Average variance and accuracy as a function of algorithm selection accuracy calculated using the VOC2012 validation dataset}
\end{figure}
Notice that similarly to the study case in Section~\ref{sec:rv} the trend of the variation is preserved however the averaging over large data set results in a smoother curve and much smaller variation values. Moreover, notice that the largest variation is  at accuracy of selection 50\% because at this accuracy there is the largest variation of the selected algorithms. 

Also observe that our results confirms lemma~\ref{lem:0}. For instance, the highest score of semantic segmentation  shown in Table~\ref{tab:results} is $\sigma_4=92.5\%$ while the highest possible semantic segmentation $\sigma_S= 94.5\%$.% Notice that the respective scores of semantic segmentation at accuracies of algorithm selection $acc(\sigma_{100\%})$, $acc(\sigma_{95\%})$ and $acc(\sigma_{90\%})$ are 94.700\%, 93.379\% and 92.102\% respectively.
%Thus we can estimate the value of the semantic segmentation score at the 92.5\% level of algorithm selection accuracy by simple interpolation: ${\sigma_{92.5\%}} \approx \frac{{(\sigma_{95\%})} - {(\sigma_{90\%})}}{2} = 92.7405$. 
%The variance for the 92.5\% algorithm selection accuracy can be calculated in a similar fashion, the variances for 95\% and 90\% of algorithm selection accuracy are 0.0164 and 0.0308 respectively. Thus we have $var(S_{92.5\%}) \approx \frac{var({\sigma_{95\%}}) - var(\sigma_{S_{90\%}})}{2} = 0.0235$. 
This means that to obtain at least 92.5\% semantic segmentation score the required accuracy must be such that $avg(acc(S)) - avg(var(\sigma_S)) \geq 92.5$.
% that matches $\sigma_4 \approx 92.5\%$ algorithm selection accuracy (Figure~\ref{fig:acclast}) resulting in 
However $acc(S)-var(\sigma_S) = 92.5-0.023 = 91.477$ is a bit too low and thus we can adjust the accuracy of the selector to 93\% and recalculate the $avg(var(\sigma_{S_{93\%}}))$. The average $\sigma$ is obtained from experimental data. We obtain $acc(S)-var(\sigma_S) = 93-0.0217 = 92.9893$ and thus we have the desired result.% Consequently, our experiment confirms our lemma within the $0.2\%$ error range. 

\section{Conclusion}
\label{sec:5}
In this paper we discussed the required precision of algorithm selection method so that one can formulate a robust requirement for performance can be formulated. 

We have shown that for non stochastic algorithms the accuracy of the algorithm selection is directly influenced by the differences between the worst and best cases of each available algorithm. 

An extension of this work is to reformulate the lemma~\ref{lem:0} so that the average of accuracies and of the variance do ot have to be performed but more direct simulation is used.

%% The file named.bst is a bibliography style file for BibTeX 0.99c
\bibliographystyle{plain}
\bibliography{../../main}
%\bibliography{ijcai16}

\end{document}